
\typeout{IJCAI--PRICAI--20 Instructions for Authors}


\documentclass{article}
\pdfpagewidth=8.5in
\pdfpageheight=11in
\usepackage{ijcai20}

\usepackage{times}
\usepackage{soul}
\usepackage{url}
\usepackage[hidelinks]{hyperref}
\usepackage[utf8]{inputenc}
\usepackage[small]{caption}
\usepackage{graphicx}
\usepackage{amsmath,bm}
\usepackage{amsthm}
\usepackage{amsfonts}
\usepackage{booktabs}
\usepackage{algorithm}
\usepackage{algorithmic}
\usepackage{subfigure}
\usepackage{color}
\usepackage{multirow}
\usepackage{balance}
\usepackage[english]{babel}

\urlstyle{same}



\newtheorem{problem}{Problem}
\newtheorem{theorem}{Theorem}[section]

\newtheorem{lemma}[theorem]{Lemma}


\setcounter{secnumdepth}{2} 

%
\setlength\titlebox{2.0in} 
\title{Feature Interaction-aware Graph Neural Networks}
\author{
Kaize Ding$^1$\and
Yichuan Li$^1$\and
Jundong Li$^2$\And
Chenghao Liu$^3$\And
Huan Liu$^1$\\
\affiliations
$^1$Computer Science and Engineering, Arizona State University, USA\\
$^2$Electrical and Computer Engineering, University of Virginia, USA\\
$^3$Information Systems, Singapore Management University, Singapore\\
\emails
\{kaize.ding, yichuan1\}@asu.edu,
jl6qk@virginia.edu,
chliu@smu.edu.sg,
huan.liu@asu.edu
}







\begin{document}
\maketitle

\begin{abstract}
Inspired by the immense success of deep learning, graph neural networks (GNNs) are widely used to learn powerful node representations and have demonstrated promising performance on different graph learning tasks. However, most real-world graphs often come with high-dimensional and sparse node features, rendering the learned node representations from existing GNN architectures less expressive. In this paper, we propose \textit{Feature Interaction-aware Graph Neural Networks (FI-GNNs)}, a plug-and-play GNN framework for learning node representations encoded with informative feature interactions. Specifically, the proposed framework is able to highlight informative feature interactions in a personalized manner and further learn highly expressive node representations on feature-sparse graphs.  Extensive experiments on various datasets demonstrate the superior capability of FI-GNNs for graph learning tasks.






\end{abstract}

\section{Introduction}

Graph-structured data, ranging from social networks to financial transaction networks, from citation networks to gene regulatory networks, have been widely used for modeling a myriad of real-world systems~\cite{hamilton2017inductive}. As the essential key for conducting various network analytical tasks (e.g., node classification, link prediction, and community detection), learning low-dimensional node representations has drawn much research attention. Recently, graph neural networks (GNNs) have demonstrated their remarkable performance in the field of network representation learning~\cite{kipf2016semi,velivckovic2017graph,hamilton2017inductive}. The main intuition behind 
GNNs
is that the node's latent representation  could be integrated by transforming, propagating, and aggregating node features from its local neighborhood.




Despite their enormous success, one fundamental limitation of existing GNNs is that the neighborhood aggregation scheme directly makes use of the raw features of nodes for the message-passing. However, the node features of many real-world graphs could be high-dimensional and sparse~\cite{he2017neural}. In a social network that represents friendship of users, to characterize the profile of a user, his/her categorical predictor variables (e.g., group and tag) could be converted to a set of binary features via one-hot encoding~\cite{tang2008arnetminer,velivckovic2017graph}; In a citation network that represents citation relations between papers, the bag-of-words or TF-IDF models~\cite{zhang2010understanding} are often used to encode papers for obtaining their node attributes. Due to the fact that existing GNN models are not tailored for learning from such feature-sparse graphs, their performance could be largely limited due to the curse of dimensionality~\cite{li2018feature}.

To handle the aforementioned feature-sparse issue, one prevalent way is to leverage the combinatorial interactions among features: for example, we can create a second-order cross-feature set \textit{gender\_income = \{M\_high, F\_high, M\_low, F\_low\}} based on variables \textit{gender = \{M, F\}} and \textit{income = \{high, low\}}. As such, feature interactions could provide more predictive power, especially when features are highly sparse. However, finding such meaningful feature interactions requires intensive engineering efforts and domain knowledge, thus it is almost infeasible to hand-craft all the informative combinations. To automatically learn the feature interactions, various approaches have been proposed in the research community, e.g., Factorization Machines~\cite{rendle2010factorization} combine polynomial regression models with factorization techniques, and DeepCross Networks~\cite{wang2017deep} model the high-order feature interactions based on the multi-layered residual network. Though they have demonstrated their effectiveness in different learning tasks, directly applying them to graph-structured data will inevitably yield unsatisfactory results. The main reason is that those methods focus on attribute-value data which is often assumed to be independent and identically distributed (\textit{i.i.d.}), while they are unable to capture the node dependencies on graphs. Therefore, capturing informative feature interactions is vital for enhancing graph neural networks, but largely remains to be explored.

Towards advancing graph neural networks to learn more expressive node representations, we propose a novel framework: feature interaction-aware graph neural networks (FI-GNNs) in this study. Our framework consists of three modules that seamlessly work together for learning on feature-sparse graphs. Specifically, the \textit{message aggregator} recursively aggregates and compresses raw features of nodes from local neighborhoods. Concurrently, the \textit{feature factorizer} extracts a set of embeddings that encodes factorized feature interactions. 
According to social identity theory~\cite{tajfel2010social}, individuals in a network are highly idiosyncratic and often exhibit different patterns, thus it is critical to consider the individuality and
personality during the feature interaction learning~\cite{chen2019bayesian}. To this end, a personalized-attention module is developed to balance the impacts of different feature interactions on demand of the prediction task. For each node, this module takes the aggregated raw features as a query and employ personalized attention mechanism to accurately highlight informative feature interactions for it. In this way, we are able to learn feature interaction-aware node representations on such feature-sparse graphs. It is worth mentioning that FI-GNN is a plug-and-play framework, which is compatible with arbitrary GNNs: by plugging different GNN models,
our framework can be extended to support learning on various types of graphs (e.g., signed graphs and heterogeneous graphs). To summarize, the major contributions of our work are as follows:


\begin{itemize}
    \item To our best knowledge, we are the first to study the novel problem of feature interaction-aware node representation learning, which addresses the limitation of existing GNN models in handling feature-sparse graphs.

    \item We present FI-GNNs, a novel plug-and-play framework that seamlessly learns node representations with informative feature interactions. 
    
    \item We conduct comprehensive experiments on real-world networks from different domains to demonstrate the effectiveness of the proposed framework.
\end{itemize}

\section{Problem Definition}

To legibly describe the studied problem, we follow the commonly used notations throughout the paper. Specifically, we use lowercase letters to denote scalars 
(e.g., $x$), boldface lowercase letters to denote vectors (e.g., $\mathbf{x}$), boldface uppercase letters to denote matrices (e.g., $\mathbf{X}$), and calligraphic fonts to denote sets (e.g., $\mathcal{V}$).

We denote a graph as $G = (\mathcal{V}, \mathcal{E}, \mathbf{X})$, where $\mathcal V$ is the set of $n$ nodes and $\mathcal{E}$ is the set of $m$ edges. $\mathbf{X} = [\mathbf{x}_1, \mathbf{x}_2, \dots, \mathbf{x}_n] \in \mathbb{R}^{n \times d}$ denotes the features of these $n$ nodes. The $j^{th}$ feature value of node $i$ is denoted as $x_{ij}$. Commonly, the topological structure of a graph can be represented by the adjacency matrix $\mathbf{A} = \{0, 1\}^{n \times n}$, where $a_{ij}=1$ indicates that there is an edge between node $i$ and node $j$; otherwise, $a_{ij}=0$. Here we focus on undirected graphs, though it is straightforward to extend our approach to directed graphs. For other notations, we introduce them in later sections. Formally, our studied problem can be defined as:

\begin{problem}
\textbf{Feature Interaction-aware Node Representation Learning}: Given an input graph $G = (\mathbf{A}, \mathbf{X})$, the model objective is to map nodes $\mathcal{V}$ to latent representations $\mathbf{Z} = [\mathbf{z}_1, ..., \mathbf{z}_n] \in \mathbb{R}^{n \times d'}$, where each node representation $\mathbf{z}_i$ incorporates interactions between its features $\mathbf{x}_i$.
\end{problem}

\section{Proposed Approach}


 \begin{figure*}[t]
    \graphicspath{{figures/}}
    \centering
    \includegraphics[width=0.9\textwidth]{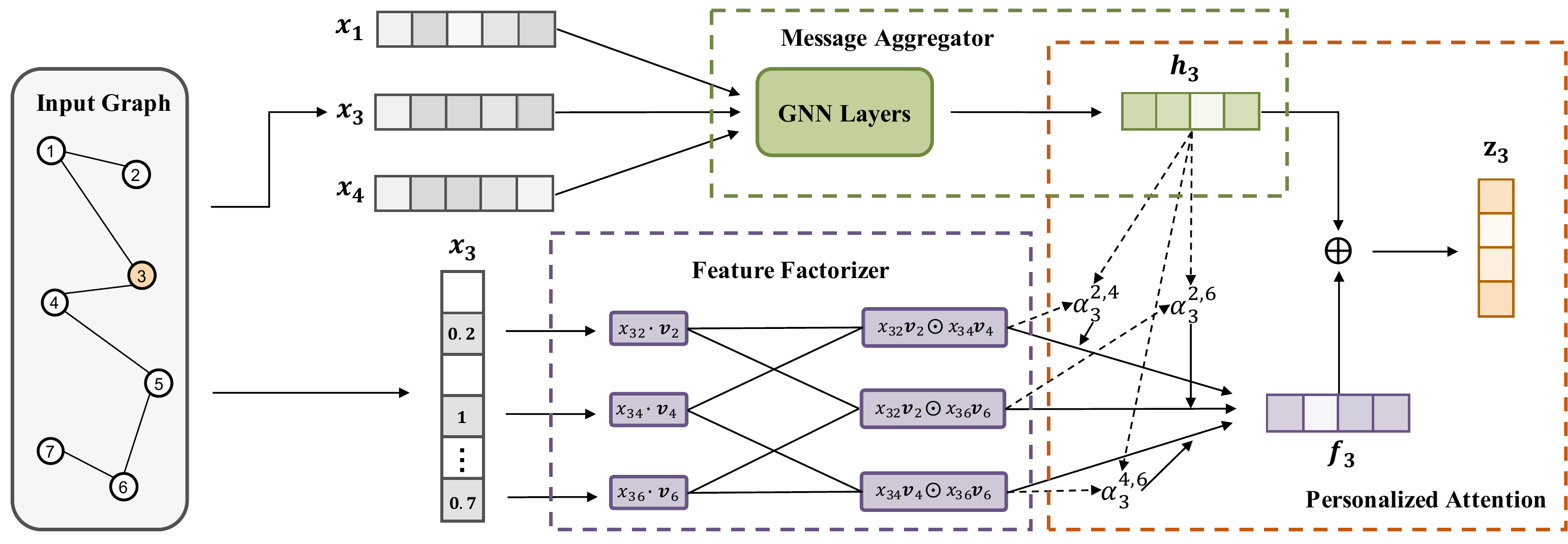}
    \caption{Illustration of the proposed feature interaction-aware graph neural networks (FI-GNNs).}%
    \label{fig:framework}%
\end{figure*}

In this section, we will illustrate the details of our feature interaction-aware graph neural networks (FI-GNNs). As shown in Figure \ref{fig:framework}, the FI-GNNs framework consists of three essential modules: (1) message aggregator; (2) feature factorizer; and (3) personalized attention. These components seamlessly model both raw features and feature interactions of nodes, yielding highly discriminative node representations.


\subsection{Message Aggregator} Our message aggregator is a GNN-based module that converts each node to a low-dimensional embedding via modeling the information from its raw features and the dependencies with its neighbors. Most of the prevailing GNN models follow the neighborhood aggregation strategy and are analogous to Weisfeiler-Lehman (WL) graph isomorphism test~\cite{xu2018powerful}. Specifically, the representation of a node is computed by iteratively aggregating representations of its local neighbors. In general, a GNN layer can be defined as:
\begin{equation}
\begin{aligned}
    \mathbf{h}_i^l &= \textsc{Combine}^l\Big( \mathbf{h}_i^{l-1},  \mathbf{h}_{\mathcal{N}_i}^l\Big),\\
    \mathbf{h}_{\mathcal{N}_i}^l &= \textsc{Aggregate}^l\Big(\{   \mathbf{h}_j^{l-1} \arrowvert \forall j \in \mathcal{N}_i \}\Big),
    \label{eq:graphSage}
\end{aligned}
\end{equation}
where $\mathbf{h}_i^{l}$ is the node representation of node $i$ at layer $l$ and $\mathcal{N}_i$ is the local neighbor set of node $i$. \textsc{Aggregate} and \textsc{Combine} are two key functions of GNNs and have a series of possible implementations~\cite{kipf2016semi,hamilton2017inductive,velivckovic2017graph}.

By stacking multiple GNN layers, the message aggregator captures the correlation between a node and its neighbors multiple hops away:
\begin{equation}
\begin{aligned}
    &\mathbf{H}^{1} = \text{GNN}^1(\mathbf{A}, \mathbf{X}),\\
    &\dots\\
    &\mathbf{H}^L = \text{GNN}^{L-1}(\mathbf{A}, \mathbf{H}^{L-1}).
\end{aligned}
\end{equation}

It should be mentioned that designing a different GNN layer is not the focus of this paper as we aim at empowering GNN models to learn node representation that incorporates feature interactions. In fact, any advanced aggregation function can be easily integrated into our framework, making the proposed FI-GNNs quite general and flexible to support learning on different types of graphs.

\subsection{Feature Factorizer}  
To automatically capture the interactions between node features, we propose to build a feature factorizer with two layers:



\smallskip
\noindent{\textbf{Sparse-feature Embedding Layer.}} In our feature factorizer, the first layer takes the sparse features of each node as input and projects each feature into a low-dimensional embedding vector through a neural layer. Formally, the $j$-th feature is projected to a $k$-dimensional dense vector $\mathbf{v}_j \in \mathbb{R}^{k}$. Thus for each node $i$, a set of embedding vectors $\mathcal{V}_i = \{x_{i1} \mathbf{v}_1, \dots, x_{id} \mathbf{v}_d \}$ is obtained to represent its features $\mathbf{x}_i$. It is worth noting that here we rescaled each embedding vector by its input feature value, which enables the feature factorizer to handle real-valued features~\cite{rendle2010factorization}. Also, due to the sparsity of the node features $\mathbf{x}_i$, we only need to consider the embedding vectors of those non-zero features for the sake of efficiency. 

\smallskip
\noindent{\textbf{Feature Factorization Layer.}} With the projected feature embedding vectors $\mathcal{V}_i$ of node $i$, the second layer aims to compute all feature interactions. Note that in this study we focus on the second-order (pair-wise) feature factorization, but the proposed model can be easily generalized to higher-order feature interactions. Following the idea of factorization machines~\cite{shan2016deep,blondel2016polynomial,he2017neural} which adopt the inner product to model the interaction between each pair of features, we propose to represent the pair-wise interaction between feature $x_{ij_1}$ and $x_{ij_2}$ as $x_{ij_1} \mathbf{v}_{j_1}  \odot  x_{ij_2} \mathbf{v}_{j_2}$, where $\odot$ denotes the element-wise product of two vectors. The output of our feature factorizer can be defined as:

\begin{equation}\label{eq:1}
    \mathcal{F}_i  = \{x_{ij_1} \mathbf{v}_{j_1}  \odot  x_{ij_2} \mathbf{v}_{j_2} | \forall (j_1, j_2) \in \mathcal{R}_x \},
\end{equation}
where $\mathcal{R}_x = \{(i, j)\}_{i \in d , j \in d, j>i}$ for simplicity. 

\subsection{Personalized Attention}
However, we observe that different feature interactions usually exhibit different levels of informativeness for characterizing a node. In a social network, consider there is a female user Alice, who is 35 years old and has high income, the feature interaction \textit{gender\_income} = \textit{F\_high} is more informative for predicting her interest as ``Jewelry", while the feature interaction \textit{gender\_age} = \textit{F\_35} is less informative since this interest is not very related to age for a woman; Reversely, for another male user Bob, who is 20 years old with low income, the feature interaction \textit{gender\_age} = \textit{M\_20} is more informative for predicting his interest as ``Sports", while the feature interaction \textit{gender\_income} = \textit{M\_low} has less impact since this interest is not quite related to income for a man.


Thus, for different nodes, the informativeness of a feature interaction could vary a lot. In this module, we propose to use a personalized attention mechanism to recognize and highlight informative feature interactions for each node. Specifically, the attention weight of each factorized feature embedding is calculated based on its interaction with the aggregated node features $\mathbf{h}_i$. The attention weight of feature interaction $x_{ij_1} \mathbf{v}_{j_1}  \odot  x_{ij_2} \mathbf{v}_{j_2}$ can be computed by:
\begin{equation}
\begin{aligned}
    a_i^{j_1, j_2} &= \mathbf{h}_i^{\mathrm{T}} \tanh(\mathbf{W}_f  (x_{ij_1} \mathbf{v}_{j_1}  \odot  x_{ij_2} \mathbf{v}_{j_2})),\\
    \alpha_i^{j_1, j_2} &= \frac{\exp(a_i^{j_1, j_2})}{\sum_{(k_1,k_2) \in \mathcal{R}_x} \exp(a_i^{k_1,k_2})},
\end{aligned}
\end{equation}
where $\mathbf{W}_f \in \mathbb{R}^{k \times k}$ is a projection matrix.
The final representation of feature interaction  $\mathbf{f}_i$ can be computed by:
\begin{equation}
    \mathbf{f}_i = \sum_{j_1 = 1}^{d} \sum_{j_2=j_1+1}^{d}  \alpha_i^{j_1,j_2}  (x_{i j_1} \mathbf{v}_{j_1}  \odot  x_{ij_2} \mathbf{v}_{j_2}).
\end{equation}

In this way, the personalized attention module will put fine-grained weights on each factorized feature interaction for each node. Finally, we concatenate $\mathbf{h}_i$ and $\mathbf{f}_i$ as the feature interaction-aware node representation $\mathbf{z}_i = \mathbf{h}_i \oplus \mathbf{f}_i$.

\subsection{Model Learning}
Based on the feature interaction-aware node representations, we are able to design different task-specific loss functions to train the proposed model. It is worth mentioning that FI-GNNs are a family of models which could be trained in a supervised,
semi-supervised, or unsupervised setting. For instance, the cross-entropy over all labeled data is employed as the loss function of the semi-supervised node classification problem~\cite{kipf2016semi}:
\begin{equation}
    \mathcal{L}_{semi} = - \frac{1}{C} \sum_{l \in \mathcal{Y}_L} \sum_{c = 1}^{C} \mathbf{Y}_{lc} \text{log}( \widehat{\mathbf{Y}}_{lc}),
\end{equation}
where $C$ is the number of class labels and $\mathcal{Y}_L$ is the set of annotated node indices in the input graph and $\widehat{\mathbf{Y}}_{lc} = \text{softmax}(\mathbf{z}_l)$.
By minimizing the loss function, we are able to predict labels of those nodes that are not included in $\mathcal{Y}_L$.

Moreover, if we aim at learning useful and predictive representations in a fully unsupervised setting~\cite{hamilton2017inductive}, the loss function can be defined as follows:
\begin{equation}
    \mathcal{L}_{un} = - \sum_{(i, j) \in \mathcal{E}} \log(\sigma(\mathbf{z}_i^{\mathrm{T}} \mathbf{z}_j)) -
    \sum_{(i, k) \in \mathcal{E}^{-}}
    \log(\sigma(-\mathbf{z}_i^{\mathrm{T}} \mathbf{z}_k)),
\end{equation}
where $\mathcal{E}^{-}$ is the negatively sampled edges that do not appear in the input graph
and here $\sigma$ is the sigmoid function. Briefly, the unsupervised loss function $\mathcal{L}_{un}$ encourages linked nodes to have similar representations, while enforcing that the representations of disparate nodes are highly distinct. 
This unsupervised setting emulates situations where node representations are provided to downstream
machine learning applications, such as link prediction and node clustering.


\subsection{Theoretical Analysis}
So far we have illustrated the details of FI-GNNs for learning discriminative node representations that incorporate feature interactions. Next, we show the connection between our proposed framework and the vanilla factorization machine. 


\begin{lemma}
FI-GNNs are the generalization of factorization machines on graph-structured data. An FI-GNN model is equivalent to the vanilla factorization machine by ignoring node dependencies.
\end{lemma}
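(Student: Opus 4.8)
The plan is to exhibit, for one particular instantiation of the three FI-GNN modules together with the task head, an exact functional correspondence with the second-order factorization machine (FM). Recall that the vanilla FM predicts, for an instance $\mathbf{x}$,
\[
\hat{y}(\mathbf{x}) = w_0 + \sum_{j=1}^{d} w_j x_j + \sum_{j_1=1}^{d}\sum_{j_2=j_1+1}^{d} \langle \mathbf{v}_{j_1}, \mathbf{v}_{j_2}\rangle\, x_{j_1} x_{j_2},
\]
so the object to be matched is the pair-wise term, while $w_0$ and the $w_j$ constitute an affine part. ``Ignoring node dependencies'' is made precise by taking the edge set empty, i.e.\ $\mathcal{N}_i=\emptyset$ for every $i$ (equivalently, replacing \textsc{Aggregate} by the map that discards neighbors); then each layer of the message aggregator degenerates to a node-wise transformation $\mathbf{h}_i=\textsc{Combine}(\mathbf{h}_i^{l-1},\mathbf{0})$, which we take to be linear, so that $\mathbf{h}_i$ is merely an embedding of $\mathbf{x}_i$ alone — precisely the i.i.d.\ regime in which FM operates.

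First I would neutralize the personalized attention: with $\mathbf{W}_f=\mathbf{0}$ we get $a_i^{j_1,j_2}=\mathbf{h}_i^{\mathrm{T}}\tanh(\mathbf{0})=0$ for every pair and every node, hence $\alpha_i^{j_1,j_2}=1/\binom{d}{2}$ uniformly, and the interaction vector collapses to
\[
\mathbf{f}_i = \frac{1}{\binom{d}{2}}\sum_{j_1=1}^{d}\sum_{j_2=j_1+1}^{d}\big(x_{ij_1}\mathbf{v}_{j_1}\big)\odot\big(x_{ij_2}\mathbf{v}_{j_2}\big),
\]
carrying no per-node idiosyncrasy beyond what $\mathbf{x}_i$ already supplies. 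Next I would apply the task head to $\mathbf{z}_i=\mathbf{h}_i\oplus\mathbf{f}_i$: letting it act on the $\mathbf{f}_i$ block by summing the $k$ coordinates and invoking the elementary identity $\sum_{t=1}^{k}(x_{ij_1}v_{j_1 t})(x_{ij_2}v_{j_2 t}) = x_{ij_1}x_{ij_2}\langle\mathbf{v}_{j_1},\mathbf{v}_{j_2}\rangle$ reproduces the FM pair-wise term up to the global constant $1/\binom{d}{2}$, which is absorbed into the learnable head weights; letting the head act on the $\mathbf{h}_i$ block as an affine map yields $w_0+\sum_j w_j x_{ij}$. Thus the FI-GNN output equals $\hat{y}(\mathbf{x}_i)$ at every node, which gives the stated equivalence; conversely, restoring a non-trivial \textsc{Aggregate} over $\mathcal{N}_i$ and a non-zero $\mathbf{W}_f$ strictly enriches the model with neighbor information and per-node weighting, so FM is recovered exactly as the dependency-free special case — establishing the generalization claim.

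I expect the only delicate points to be bookkeeping rather than conceptual. The softmax normalizes the attention weights to sum to one, so one must check that the induced scalar $1/\binom{d}{2}$ (and any constant coming from the embedding dimension) is genuinely harmless, i.e.\ reabsorbed by the downstream linear layer without altering the realized function class; and one must write out the element-wise-product-to-inner-product identity that bridges FI-GNN's vectorial interaction encoding with FM's scalar one. Neither step needs more than a line of algebra, and once the empty-graph reduction is fixed the remaining assertion, that the model with message passing restored strictly contains FM, is immediate from the construction.
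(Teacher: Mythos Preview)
Your proposal is correct and follows essentially the same route as the paper: neutralize the attention so that $\mathbf{f}_i$ becomes an unweighted (or uniformly weighted) sum of the element-wise products, collapse the message aggregator to a per-node linear map $\mathbf{h}_i=\mathbf{W}\mathbf{x}_i$ by removing neighbor contributions, and then read off the FM formula by applying an affine head whose action on the $\mathbf{f}_i$ block sums coordinates (turning $\odot$ into $\langle\cdot,\cdot\rangle$). The only cosmetic difference is that the paper simply drops the attention weights outright, whereas you realize the same effect inside the parameter space via $\mathbf{W}_f=\mathbf{0}$ and absorb the resulting $1/\binom{d}{2}$ into the head; both are fine.
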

\begin{proof}
Here we take the FI-GCN with a one-layer message aggregator as an example. First, we ignore the attention weights in the personalized attention and directly project the concatenated embeddings to a prediction score, this simplified model can be expressed as:
\begin{equation}
     \hat{y}_i = w_0 + \mathbf{u}^{\mathrm{T}} \bigg(\mathbf{h}_i \oplus \sum_{j_1 = 1}^{d} \sum_{j_2=j_1+1}^{d}  x_{ij_1} \mathbf{v}_{j_1} \odot  x_{ij_2} \mathbf{v}_{j_2} \bigg),
     \label{eq14}
\end{equation}
where $w_0$ is the global bias and $\mathbf{u} \in \mathbb{R}^{2k}$ is the projection vector. By neglecting the dependencies between nodes, we can directly get:
\begin{equation}
     \hat{y}_i = w_0 + \mathbf{u}^{\mathrm{T}} \bigg(\mathbf{W}  \mathbf{x}_i \oplus \sum_{j_1 = 1}^{d} \sum_{j_2=j_1+1}^{d}  x_{ij_1} \mathbf{v}_{j_1} \odot  x_{ij_2} \mathbf{v}_{j_2} \bigg).
     \label{eq15}
\end{equation}


If we further fix $\mathbf{u}$ to a constant vector of $[1, \dots, 1] \in \mathbb{R}^{2k}$, Eq. (\ref{eq15}) can be reformulated as:
\begin{equation}
\begin{aligned}
\hspace{-0.08cm}     \hat{y}_i &= w_0 + \mathbf{1}^{\mathrm{T}} \mathbf{W} \mathbf{x}_i +   \mathbf{1}^{\mathrm{T}} \sum_{j_1 = 1}^{d} \sum_{j_2=j_1+1}^{d}  x_{ij_1} \mathbf{v}_{j_1} \odot  x_{ij_1} \mathbf{v}_{j_2}\\
     &= w_0 + \mathbf{w}^{\mathrm{T}} \mathbf{x}_i + 
      \sum_{j_1 = 1}^{d} \sum_{j_2 = j_1+1}^{d} \langle \mathbf{v}_{j_1} \mathbf{v}_{j_2} \rangle \cdot x_{ij_1} x_{ij_2},
\end{aligned}
\end{equation}
where $\mathbf{1} \in \mathbb{R}^k$ and $\mathbf{w}^{\mathrm{T}} = \mathbf{1}^{\mathrm{T}} \mathbf{W} $. Thus the vanilla FM model can be exactly recovered.
\end{proof}

\section{Experiments}

In this section, we conduct extensive experiments to evaluate the effectiveness of the proposed framework FI-GNNs.

\subsection{Experiment Settings}
\smallskip
\noindent \textbf{Evaluation Datasets.} To comprehensively understand how FI-GNNs work, we adopt four public real-world attributed network datasets~(BlogCatalog~\cite{li2015unsupervised}, Flickr~\cite{li2015unsupervised}, ACM~\cite{tang2008arnetminer} and DBLP~\cite{kipf2016semi}). 
Table \ref{table:datasets} reports brief statistics for each dataset. Note that the node features of all the above datasets are generated by the bag-of-words model, yielding high-dimensional and sparse node features. As listed in Table \ref{table:datasets}, the average number of non-zero features (nnz) of nodes is significantly smaller than the number of feature dimensions. 

\smallskip
\noindent \textbf{Compared Methods.} For performance comparison, we compare the proposed framework FI-GNNs with three types of baselines: (1) \textit{Random walk-based methods} including DeepWalk~\cite{perozzi2014deepwalk} and node2vec~\cite{grover2016node2vec}; (2) \textit{Factorization machine-based methods} including the vanilla Factorization Machine (FM)~\cite{rendle2010factorization} and neural factorization machine~(NFM)~\cite{he2017neural}; and (3) \textit{GNN-based methods} including GCN~\cite{kipf2016semi} and GraphSAGE~\cite{hamilton2017inductive} with mean pooling. In order to show the effectiveness and flexibility of the proposed framework, we compare above baseline methods with two different instantiations of FI-GNNs, i.e., FI-GCN and FI-GraphSAGE.

\begin{table}[t!]
\caption{Statistics of the four real-world datasets.}
\centering
\scalebox{0.86}{
\begin{tabular}{@{}lccccc@{}}
\toprule

\multirow{2}{*}{} & \multicolumn{2}{c}{Social Networks} & & \multicolumn{2}{c}{Citation Networks} \\
\cline{2-3} \cline{5-6}

\rule{0pt}{10pt} & BlogCatalog & Flickr & & ACM &  DBLP \\ \midrule
\# nodes & 5,196 & 7,575 & & 16,484 & 18,448   \\
\# edges & 173,468 & 242,146 & &  71,980  & 44,338   \\
\# attributes     & 8,189  & 12,047  & &  8,337  & 2,476\\
\# degree (avg)  & 66.8  & 63.9  &  &  8.7 & 4.8\\
\# nnz (avg)  & 71.1  & 24.1  &  &  28.4 & 5.6\\
\# labels & 6 & 9 & & 9 & 4   \\
\bottomrule
\end{tabular}
}

\label{table:datasets}
\end{table}


        
    
    
    
    

\begin{table*}[t!]
\centering
\caption{Semi-supervised node classification results on four datasets w.r.t ACC and F1 (\%).}
\scalebox{0.86}{
\begin{tabular}{@{}lcccccccccccccccc@{}}
\toprule
\rule{0pt}{10pt} \multirow{2}{*}{Methods}  & \multicolumn{2}{c}{BlogCatalog}  & &  \multicolumn{2}{c}{Flickr}  & & \multicolumn{2}{c}{ACM}  & & \multicolumn{2}{c}{DBLP} \\ \cline{2-3} \cline{5-6} \cline{8-9} \cline{11-12}

\rule{0pt}{10pt} & \multicolumn{1}{c}{ACC}  & \multicolumn{1}{c}{F1} & & \multicolumn{1}{c}{ACC} & \multicolumn{1}{c}{F1} & &
\multicolumn{1}{c}{ACC} & \multicolumn{1}{c}{F1}  & &
\multicolumn{1}{c}{ACC} & \multicolumn{1}{c}{F1}

\\ \hline

DeepWalk        & 59.5  & 60.3 & & 46.1 & 45.4 & & 57.2 & 54.7 & & 65.7 & 61.9    \\
node2vec        & 61.7 & 61.6 & & 43.3 & 40.3 & & 56.9 & 54.0 && 64.4 & 63.8 \\
\hline

FM  & 60.6 &  60.2 && 43.0 & 42.8 && 42.9 & 36.6 && 65.4 & 57.9 \\
NFM  & 61.3 & 61.2 && 44.2 & 43.6 && 44.8 & 43.3 && 67.1 & 66.8         \\
\hline

GCN        &  73.5 & 73.0 && 52.7 & 51.1 && 72.9 & 71.0 && 83.5 & 83.3   \\
GraphSAGE   &  78.8  & 78.3 && 71.7 & 71.4 && 59.3 & 58.8 && 72.6 & 71.8   \\
\hline


FI-GCN  &  80.1({\small +9.0\%}) & 80.0({\small +9.6\%}) &&  55.8({\small +5.9\%}) & 55.4({\small +8.4\%}) && 74.2({\small +1.8\%}) & 73.6({\small+3.7\%}) && 84.6({\small +1.3\%}) & 84.4({\small +1.3\%})\\

FI-GraphSAGE   & 86.3({\small +9.5\%}) & 86.1({\small +9.9\%}) && 73.8({\small +2.9\%}) & 73.8({\small +3.7\%}) && 61.7({\small +4.0\%}) & 59.8({\small +1.7\%}) && 74.1({\small +2.1\%}) & 73.0({\small +1.7\%})   \\

\bottomrule
\end{tabular}}

\label{table:semi}

\end{table*}




\smallskip
\noindent{\textbf{Implementation Details.}} 
The two instantiations of FI-GNNs are implemented in PyTorch~\cite{paszke2017automatic} with Adam optimizer~\cite{kingma2014adam}. For each of them, we build the message aggregator with two corresponding GNN layers (32-neuron and 16-neuron, respectively) and each layer has the ReLU activation function. In addition, the embedding size in our feature factorizer is set to 16. 
For the baseline methods, we use their released implementations, and the embedding size of these algorithms are set to 32 for a fair comparison. FI-GNNs are trained with a maximum of 200 epochs and an early stopping strategy. The learning rate and dropout rate of FI-GNNs are set to 0.005 and 0.1, respectively. The hyper-parameters (e.g., learning rate, dropout rate) of baseline methods are selected according to the best performance on the validation set. 



\begin{table*}[t!]
\centering

\caption{Unsupervised link prediction results on four datasets w.r.t AUC and AP (\%).}
\scalebox{0.86}{
\begin{tabular}{@{}lcccccccccccccccc@{}}
\toprule
\rule{0pt}{10pt} \multirow{2}{*}{Methods}  & \multicolumn{2}{c}{BlogCatalog}  & &  \multicolumn{2}{c}{Flickr}  & & \multicolumn{2}{c}{ACM}  & & \multicolumn{2}{c}{DBLP} \\ \cline{2-3} \cline{5-6} \cline{8-9} \cline{11-12}

\rule{0pt}{10pt} & \multicolumn{1}{c}{AUC}  & \multicolumn{1}{c}{AP} & & \multicolumn{1}{c}{AUC} & \multicolumn{1}{c}{AP} & &
\multicolumn{1}{c}{AUC} & \multicolumn{1}{c}{AP}  & &
\multicolumn{1}{c}{AUC} & \multicolumn{1}{c}{AP}

\\ \hline
DeepWalk        & 65.0 & 63.8 & & 56.9 & 52.4 && 79.1 & 75.3 && 66.1 & 61.2   \\

node2vec & 60.9  & 61.5 & & 51.9 & 56.5 && 69.2 & 69.7 && 79.8 & 78.2 \\
\hline

FM & 62.7 & 61.2 && 80.4 & 79.6 && 50.3 &  50.1 && 52.4 & 52.3\\
NFM  & 64.8 & 64.0 && 84.5 & 83.9 && 50.5 &  50.4 && 54.8 & 54.4 \\
\hline

GCN        &  81.1  & 81.1 && 90.1 & 91.1 && 92.4 & 91.1 && 85.6 & 84.4   \\
GraphSAGE   &  71.7  & 70.3 && 86.1 & 86.8 && 85.9 & 83.9 && 85.2 & 84.0   \\
\hline


FI-GCN  &  85.9({\small +5.9\%})  & 85.9({\small +5.9\%}) && 93.0({\small +3.2\%}) & 92.5({\small +1.5\%}) && 93.3({\small +1.0\%}) & 92.9({\small +2.0\%}) && 88.8({\small +3.7\%}) & 89.5({\small +6.0\%}) \\
FI-GraphSAGE   & 76.0({\small+6.0\%})  & 75.0({\small +6.7\%}) && 89.0({\small +3.4\%}) & 88.7({\small +2.2\%}) && 87.7({\small +2.1\%})  & 86.8({\small +3.5\%}) && 88.7({\small +4.1\%}) & 86.0({\small +2.4\%})   \\

\bottomrule
\end{tabular}}

\label{table:unsup}

\end{table*}

\subsection{Evaluation Results}
\smallskip
\noindent{\textbf{Semi-Supervised Node Classification.}} The first evaluation task is semi-supervised node classification, which aims to predict the missing node labels with a small portion of labeled nodes. For each dataset, given the entire nodes $\mathcal{V}$, we randomly sample 10\% of $\mathcal{V}$ as the training set and use another 20\% nodes as the validation set for hyper-parameter optimization. During the training process, the algorithm has access to all of the nodes' feature vectors and the network structure. The predictive power of the trained model is evaluated on the left 70\% nodes.
We repeat the evaluation process 10 times and report the average performance in terms of two evaluation metrics (Accuracy and Micro-F1) in Table~\ref{table:semi}. Note that we also include the performance improvements of FI-GNNs over their corresponding GNN models in this table. The following findings can be inferred from the table:
\begin{itemize}

    

    
    \item The two instantiations of FI-GNNs (FI-GCN and FI-GraphSAGE) outperform their corresponding GNN models (GCN and GraphSAGE) on all the four datasets. In particular, FI-GNNs achieve above 9\% performance improvement over GNN models on the BlogCatalog dataset. It validates the necessity of incorporating informative feature interactions into node representation learning on the feature-sparse graphs.
    

    
    \item Overall, FI-GNNs achieve higher improvements on social networks than citation networks in our experiments. According to the finding from previous research~\cite{li2017attributed}, the class labels in social networks are more closely related to the node features, while the class labels in citation networks are more closely related to the network structure. It explains why our FI-GNN models are more effective in social network data by considering informative feature interactions.

\end{itemize}


\smallskip
\noindent{\textbf{Unsupervised Link Prediction.}} The second evaluation task is unsupervised link prediction, which tries to infer the links that may appear in a foreseeable future. Specifically, we randomly sample 80\% edges from $\mathcal{E}$ and an
equal number of nonexistent links as the training set. Meanwhile, another two sets of 10\% existing links and an equal number of nonexistent links are used as validation and test sets. We conduct each experiment 10 runs and report the test set performance when the best performance on the validation set is achieved. The average performance is reported with the area under curve score (AUC) and average precision (AP). The experimental results on link prediction are shown in Table \ref{table:unsup}. The performance improvements of FI-GNNs over corresponding GNN models are included. Accordingly, we make the following observations:

\begin{itemize}

    \item The FI-GNN models are able to achieve superior link prediction results over other baselines. For instance, FI-GCN and FI-GraphSAGE improve around 6\% over GCN and GraphSAGE on the BlogCatalog dataset. The experimental results successfully demonstrate that the proposed FI-GNNs are able to learn more discriminative node representations on feature-sparse graphs under the unsupervised setting than existing GNN models.

    \item Both the \textit{random walk-based methods} (DeepWalk and node2vec) and \textit{factorization machine-based methods} show unsatisfactory performance in our experiments. The main reason is that \textit{random walk-based methods} are unable to consider node attributes and \textit{factorization machine-based methods} are unable to consider node dependencies for learning node repesentations.

    


\end{itemize}
\begin{figure}[!b]
    \graphicspath{{figures/}}
    \centering
    \vspace{-0.4cm}
    \subfigure[node classification]
    {
     \includegraphics[width=0.5\columnwidth]{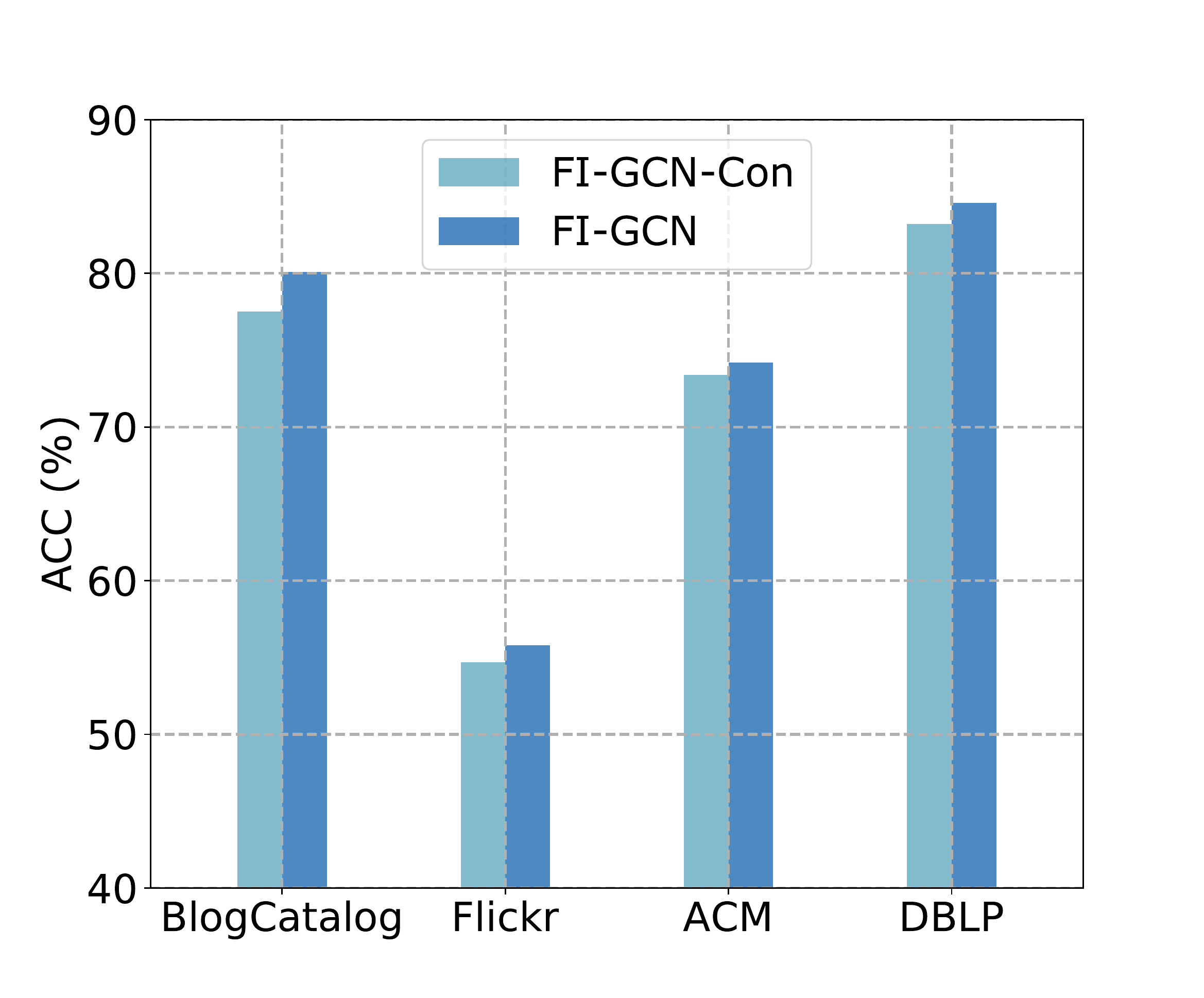}
    }
    \hspace{-0.52cm}
    \subfigure[link prediction]
    {
     \includegraphics[width=0.5\columnwidth]{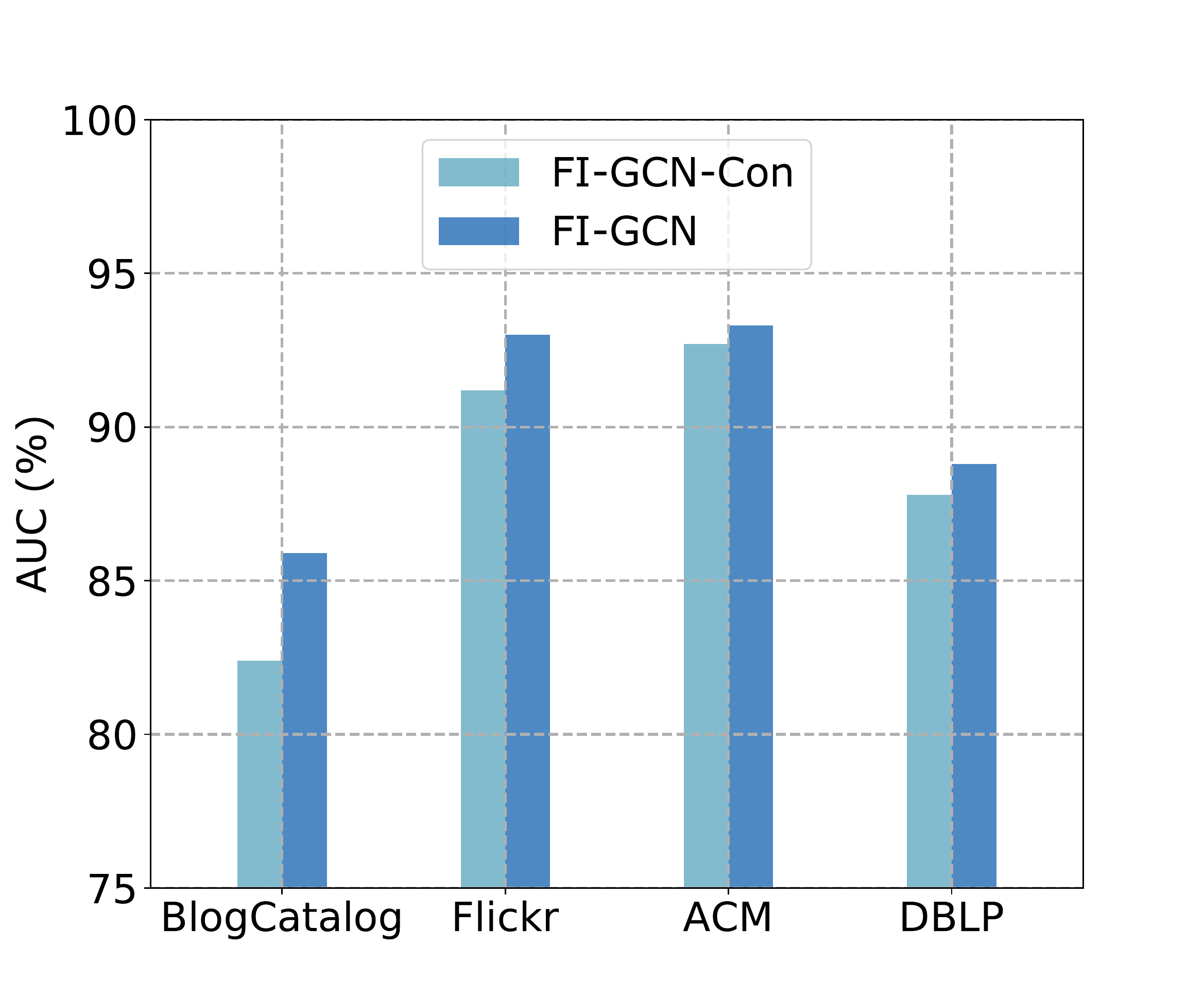}
    }
    \vspace{-0.3cm}
    \caption{The comparison results of ablation study.}%
    \label{fig:ablation}%
\end{figure}

\begin{figure*}[!t]
    
    \graphicspath{{figures/}}
    \centering
    \subfigure[DeepWalk] 
    {
    \includegraphics[width=0.475\columnwidth]{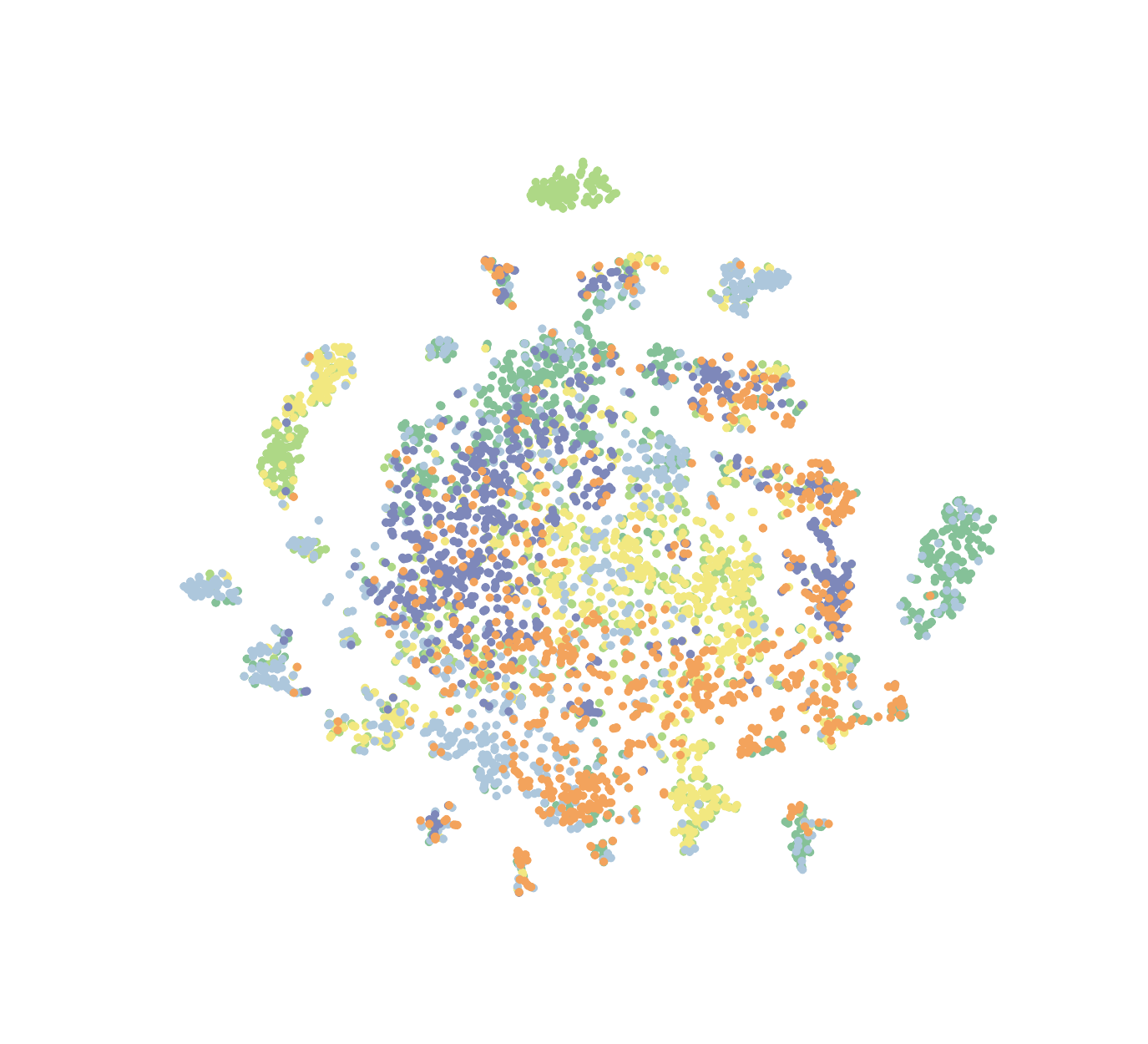}
    }
    \hspace{-0.5cm}
    \subfigure[node2vec]
    {
    \includegraphics[width=0.475\columnwidth]{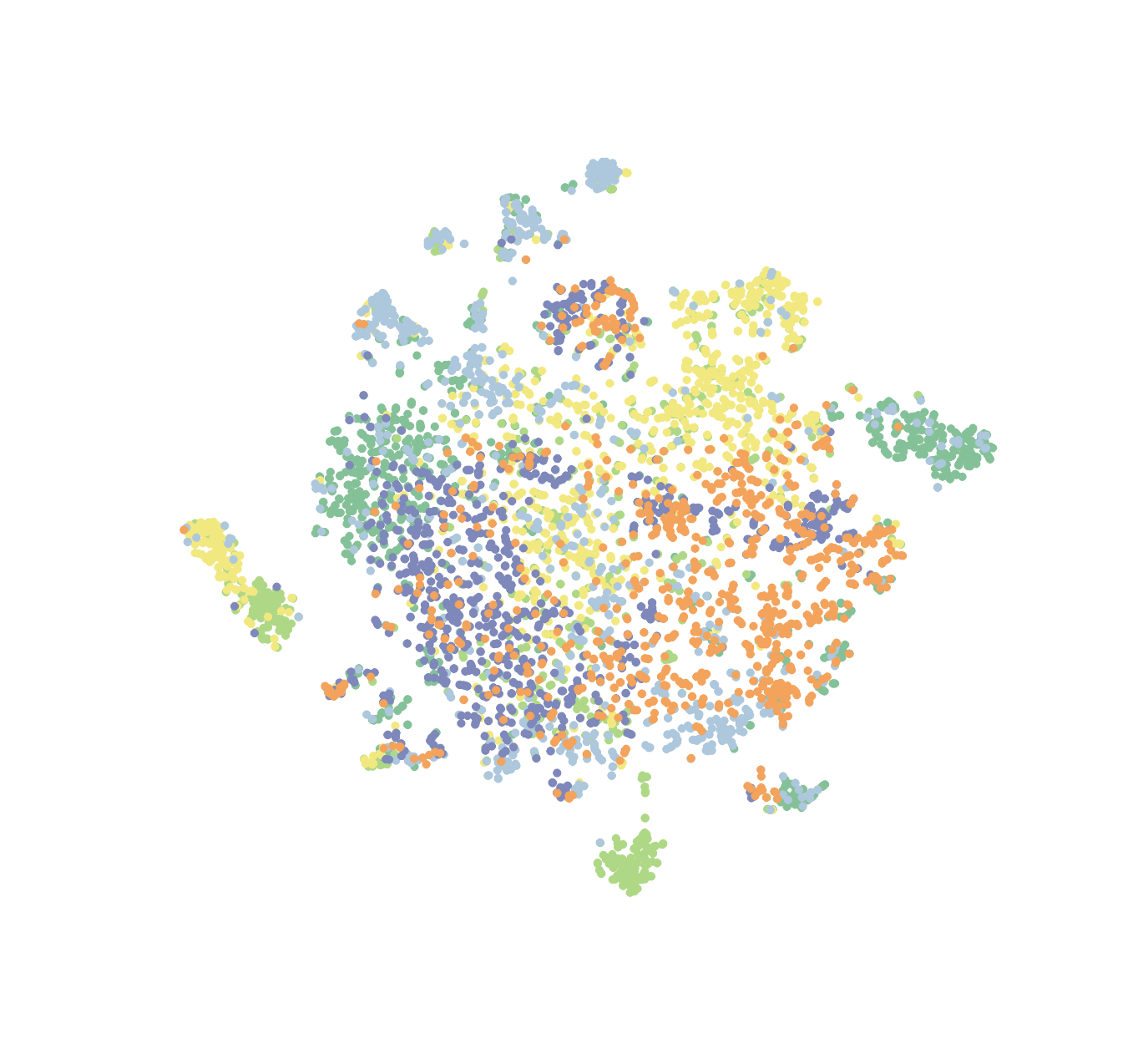}
    }
    \hspace{-0.5cm}
    \subfigure[GCN] 
    {
    \includegraphics[width=0.475\columnwidth]{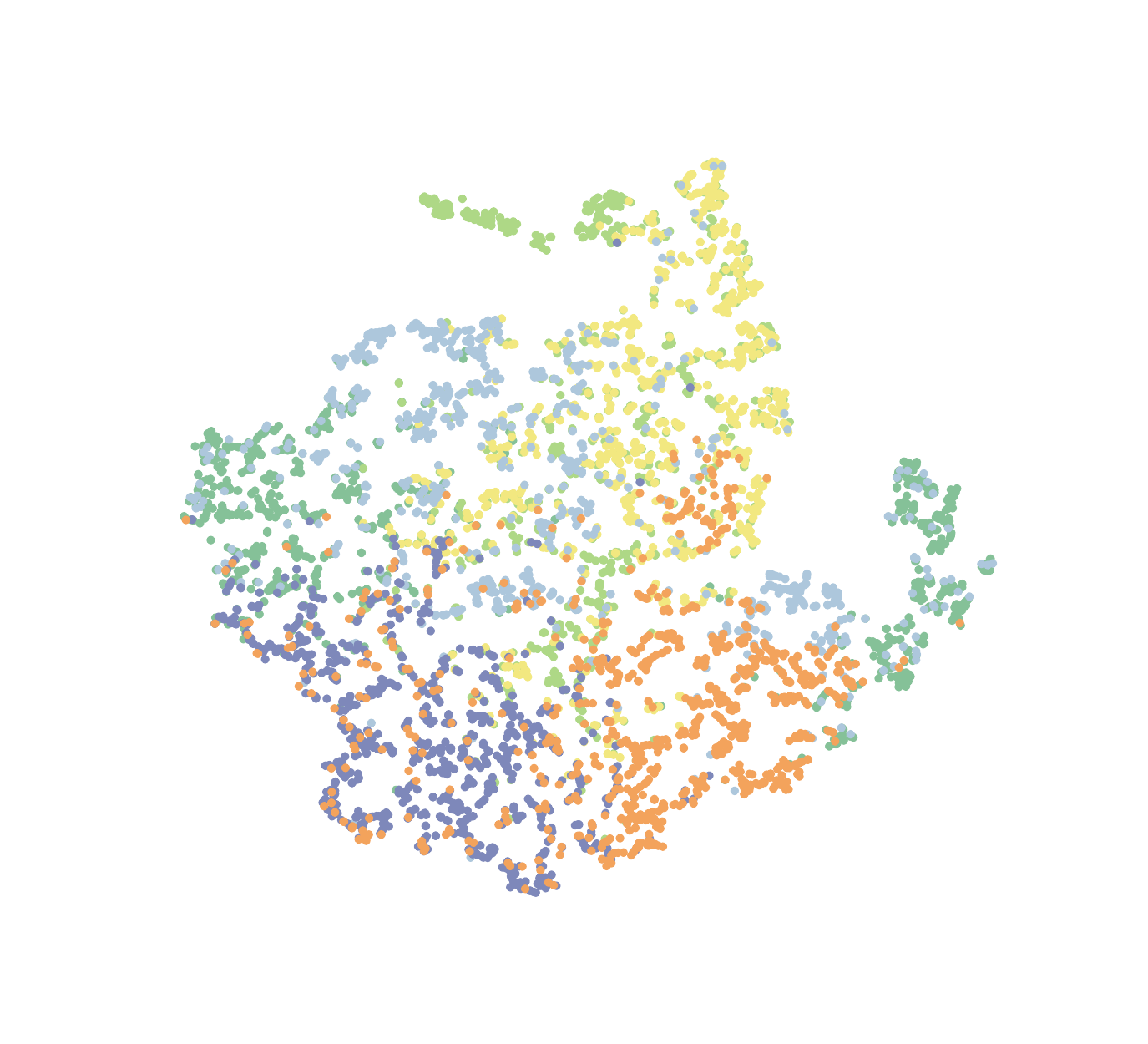}
    }
    \hspace{-0.5cm}
    \subfigure[FI-GCN]
    {
    \includegraphics[width=0.475\columnwidth]{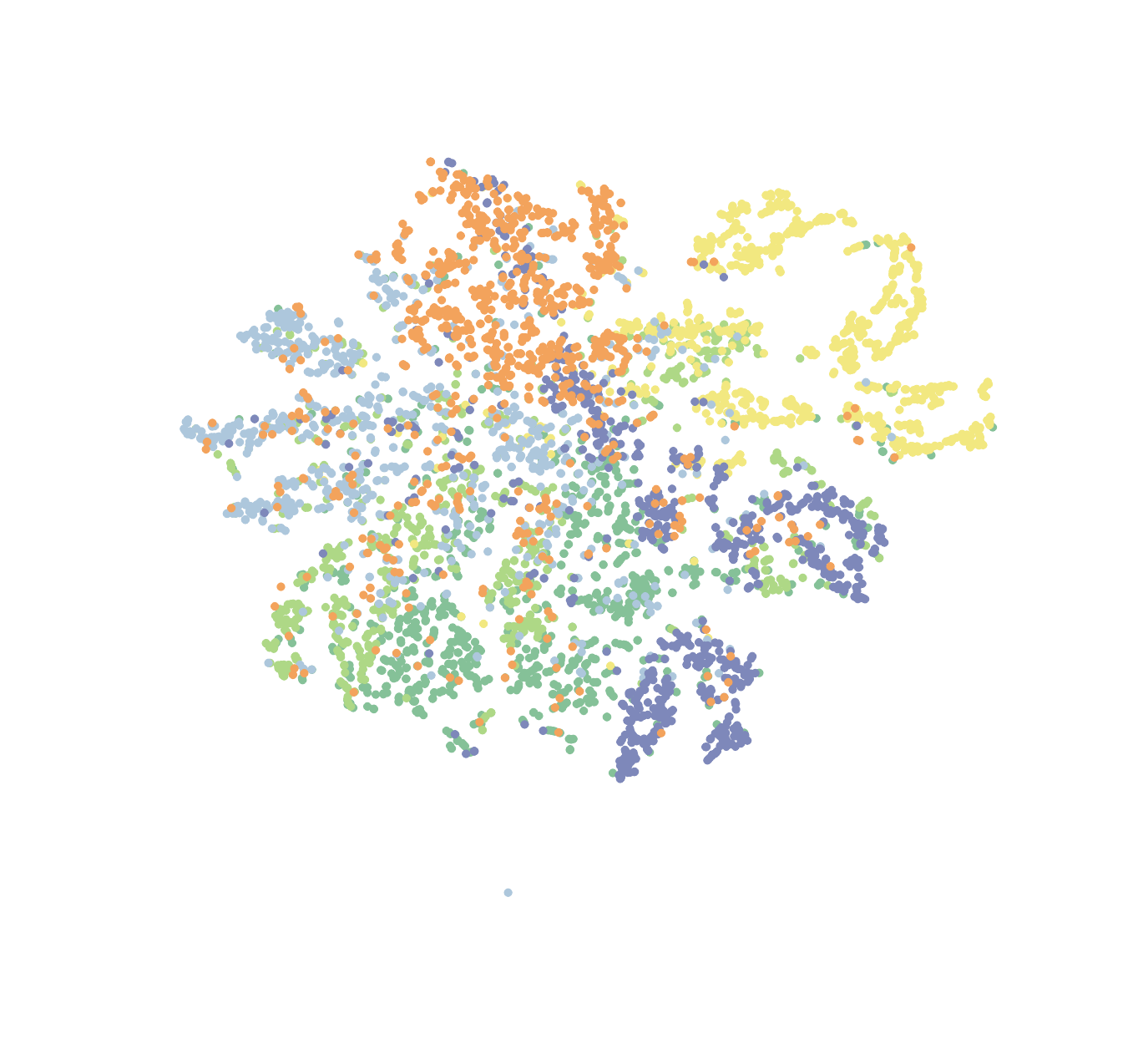}
    }
    \caption{Embedding visualization of different methods for the BlogCatalog dataset.}%

    \label{fig:embed_vis}
\end{figure*} 

\subsection{Further Analysis}

\smallskip
\noindent{\textbf{Ablation Study.}} 
From the previous experiment results, we already show the effectiveness of the message aggregator and the feature factorizer. Here we conduct an ablation study to investigate the effect of the personalized attention. Specifically, we compare FI-GCN with a variant FI-GCN-Con, which excludes the personalized attention by directly summing the feature interaction representation. The comparison results of two evaluation tasks (semi-supervised node classification and unsupervised link prediction) on four datasets are presented in Figure \ref{fig:ablation}. From the reported results, we can clearly observe that FI-GCN-Con falls behind FI-GCN with a noticeable margin on the two evaluation tasks, which demonstrates the effectiveness of our proposed personalized attention.

\smallskip
\noindent{\textbf{Embedding Visuliazation.}} To further show the embedding quality of the proposed framework FI-GNNs, we use t-SNE~\cite{maaten2008visualizing} to visualize the extracted node representations from different models. Due to the
space limitation, we only post the results of DeepWalk, node2vec, GCN, and FI-GCN for the BlogCatalog dataset under the semi-supervised setting. The visualization results are shown in Figure \ref{fig:embed_vis}. Note that the node's color
corresponds to  its label, verifying the model's discriminative power across the six user groups of BlogCatalog. As observed in Figure \ref{fig:embed_vis}, the random walk-based methods (e.g., DeepWalk, node2vec) cannot effectively identify different classes. Despite GCN improves the embedding quality by incorporating the node features in the learning process, the boundary between different classes is still unclear. FI-GCN performs the best as it can achieve more compact and separated clusters.


    
    

\section{Related Work}

\smallskip
\noindent{\textbf{Graph Neural Networks.}}
Graph neural networks (GNNs), a family of neural models for learning latent node representations in a graph, have achieved remarkable success in different graph learning tasks~\cite{scarselli2008graph,bruna2013spectral,defferrard2016convolutional,kipf2016semi,velivckovic2017graph}. Most of the prevailing GNN models follow the neighborhood aggregation scheme, learn the latent node representation via message passing among local neighbors in the graph. Though based on graph spectral theory, the learning process of graph convolutional networks (GCNs)~\cite{kipf2016semi} also can be considered as a mean-pooling neighborhood aggregation; GraphSAGE~\cite{hamilton2017inductive} concatenates the node's feature in addition to mean/max/LSTM pooled neighborhood information; Graph Attention Networks (GATs) incorporate trainable attention weights to specify fine-grained weights on neighbors when aggregating neighborhood information of a node. Recent research further extend GNN models to consider global graph information~\cite{battaglia2018relational} and edge information~\cite{gilmer2017neural} during aggregation. However, the aforementioned GNN models are unable to capture 
high-order signals among features. Our approach tackles this problem by incorporating feature interactions into graph neural networks, which enables us to learn more expressive node representations for different graph mining tasks.

\smallskip
\noindent{\textbf{Factorization Machines.}}
The term Factorization Machine (FM) is first proposed in \cite{rendle2010factorization}, combining the key ideas of factorization models (e.g., MF, SVD) with general-purpose machine learning techniques~\cite{scholkopf2001learning}. In contrast to conventional factorization models, FM is a general-purpose predictive framework for arbitrary machine learning tasks and characterized by its
usage of the inner product of factorized parameters to model
pairwise feature interactions. More recently, FMs have received neural makeovers due to the success of deep learning. Specifically, Neural Factorization Machines (NFMs)~\cite{he2017neural} was proposed to enhance the expressive ability of standard FMs with
nonlinear hidden layers for sparse predictive analytics. DeepFM~\cite{guo2017deepfm} is another model that combines the prediction scores of a
deep neural network and FM model for CTR prediction. Moreover, Xiao et al.~\cite{xiao2017attentional} equipped FMs with
a neural attention network to discriminate the importance of each feature interaction, which not only improves the representation ability but also the interpretability of an FM model. Despite the effectiveness of modeling various feature interactions, existing FM variants are unable to handle graph-structured data due to the inability of modeling complex dependencies amongst nodes. Our proposed framework can be considered as a powerful extension of FMs on graph-structured data.


\section {Conclusion}

In this paper, we propose Feature Interaction-aware Graph Neural Networks (FI-GNNs), a novel graph neural network framework for computing node representations that incorporates informative feature interactions. Specifically, the proposed framework extracts a set of feature interactions for each node and highlights those informative ones based on personalized attention. This plug-and-play framework that can be extended to various types of graphs for different predictive tasks, which is highly easy-to-use and flexible. Experimental results on four real-world datasets demonstrate the effectiveness of our proposed framework. 

\newpage
\balance
\bibliographystyle{named}
\bibliography{ijcai20}

\end{document}